\def\eqref#1{equation~\ref{#1}}
\def\1{\bm{1}}
\DeclareMathAlphabet{\mathsfit}{\encodingdefault}{\sfdefault}{m}{sl}
\SetMathAlphabet{\mathsfit}{bold}{\encodingdefault}{\sfdefault}{bx}{n}
\newtheorem{theorem}{Theorem}[section]
\newtheorem{lemma}[theorem]{Lemma}
\title{Hedonic Neurons: A Mechanistic Mapping of Latent Coalitions in Transformer MLPs}
\author{Tanya Chowdhury, Atharva Nijasure, Yair Zick \& James Allan\\
Center for Intelligent Information Retrieval\\
University of Massachusetts Amherst\\
\texttt{\{tchowdhury,anijasure,yzick,allan\}@cs.umass.edu} \\
}
\begin{document}

\maketitle

\begin{abstract}
Fine-tuned Large Language Models (LLMs) encode rich task-specific features, but the form of these representations—especially within MLP layers—remains unclear. Empirical inspection of LoRA updates shows that new features concentrate in mid-layer MLPs, yet the scale of these layers obscures meaningful structure. Prior probing suggests that statistical priors may strengthen, split, or vanish across depth, motivating the need to study how neurons \emph{work together} rather than in isolation.  

We introduce a mechanistic interpretability framework based on \emph{coalitional game theory}, where neurons mimic agents in a hedonic game whose preferences capture their synergistic contributions to layer-local computations. Using top-responsive utilities and the PAC-Top-Cover algorithm, we extract \emph{stable coalitions of neurons}—groups whose joint ablation has non-additive effects—and track their transitions across layers as persistence, splitting, merging, or disappearance.  

Applied to LLaMA, Mistral, and Pythia rerankers fine-tuned on scalar IR tasks, our method finds coalitions with consistently higher synergy than clustering baselines. By revealing how neurons cooperate to encode features, hedonic coalitions uncover higher-order structure beyond disentanglement and yield computational units that are functionally important, interpretable, and predictive across domains.
\end{abstract}

\section{Introduction}
\label{sec:intro}

Consider a large language model fine-tuned to compute semantic similarity between text pairs.  
When presented with two sequences, the model outputs a scalar score, e.g.\ $0.76$.  
But how is this number internally computed?  
Within the millions of parameters of a transformer, such decisions are not the work of isolated neurons, but of groups that cooperate to represent abstract features like ``semantic overlap,'' ``term frequency patterns,'' or ``syntactic alignment.''   
These computations may parallel familiar retrieval metrics -- e.g., some neuron coalitions might compute TF-IDF-like statistics~\citep{sparck1972statistical}, others might capture cosine similarities between representations, while still others might encode position-dependent matching signals.  
Traditional interpretability methods have limitations here: probing~\citep{gurnee2023finding,hewitt2019structural} captures correlations with labels but ignores cooperation, sparse autoencoders (SAEs)~\citep{cunningham2023sparseautoencodershighlyinterpretable} disentangle activations into monosemantic directions but overlook nonlinear dependencies, and clustering~\citep{cao2025neurflow,song2024large} groups neurons by statistical proximity rather than functional interaction.  
What is missing is a principled way to identify \emph{synergistic neuron groups}—subsets whose combined contribution exceeds the sum of their parts.  
We aim to identify the computational units that organize into stable coalitions, that potentially encode these mathematical concepts within scalar-output LLMs.  

Recent work has shown that LoRA fine-tuning can teach LLMs new tasks by updating only mid-level MLP layers, nearly matching full fine-tuning~\citep{hu2022lora,zhou-etal-2024-empirical,nijasure2025relevance}.  
Yet inspection of these LoRA weight updates reveals little obvious structure: millions of parameters diffuse across neurons, obscuring which units encode task-specific features.  
We hypothesize that the key to isolating LoRA emergent behaviour lies in identifying \emph{coalitions} of neurons that consistently co-adapt under fine-tuning.  
Inspired by game theory, we model neurons as agents in a \emph{hedonic game}~\citep{dreze1980hedonic}, where preferences reflect synergy with others.  
Though neurons are not literally rational, stochastic gradient descent imposes a form of selection pressure: directions that reduce loss persist, and many neurons are only useful in combination (e.g., a feature computation).  
Thus, stable coalitions naturally emerge as groups of neurons that survive training together.  
This evolutionary analogy motivates the hedonic game framing: utilities capture how much a neuron’s survival depends on its synergy with others, and stable coalitions correspond to groups of neurons that consistently co-adapt under training.  
By modeling these groups as coalitions, we open a path toward reverse-engineering their function and symbolically characterizing their emergent behavior.  

\emph{Why does this matter?} Beyond offering a new perspective on interpretability, coalition analysis provides actionable insight into how task-specific features are represented and evolve.  
By showing which neuron groups are functionally indispensable, our framework suggests future directions for practical interventions such as model comparison, transfer learning, or modular editing at the coalition level rather than at the level of individual weights.  
Moreover, tracking persistence, splits, and vanishings highlights how statistical priors are refined or discarded across depth, shedding light on the internal dynamics of fine-tuned models—information that clustering or SAE-style methods cannot reveal.  
Thus, stable coalitions are not only theoretically appealing but also open a path to understanding and eventually controlling the computational units that fine-tuning creates.  

\textbf{Our Contributions}.  
We introduce a game-theoretic framework for discovering and analyzing neuron coalitions in transformer MLPs.  
(1) We model neurons as players in a hedonic cooperative game with additively separable utilities based on synergy, and solve for $\varepsilon$-PAC-stable outcomes using the PAC-Top-Cover algorithm.  
(2) We evaluate coalitions both intrinsically and extrinsically: compared to clustering baselines, they achieve +0.29 \emph{Pairwise} and +0.49 \emph{Ratio} synergy, exhibit 3--5$\times$ larger out-of-distribution performance drops under ablation, align more strongly with IR heuristics (BM25, IDF, query term coverage), and yield macro-features that improve predictive $R^2$ from $\sim$0.20 to $0.43$–$0.47$.  
(3) Treating coalitions as ``meta-neurons,'' we trace their evolution across consecutive layers, finding that most groups vanish or split while only a small fraction persist—supporting the view that deeper MLPs act primarily as feature filters rather than creators.  
Applied to LLaMA, Mistral, and Pythia LoRA rerankers, we show that hedonic coalitions consistently uncover reproducible and functionally indispensable computational units.  
To our knowledge, this is the first work to use game theory to identify, validate, and track synergistic neuron groups in fine-tuned LLMs.  
All code, models, and datasets are provided with the submission.

\section{Background}
\label{sec:background}

We begin by outlining the fundamentals of hedonic games and their application in modeling cooperative behavior. We then describe the transformer architecture with an emphasis on MLP sublayers.

\subsection{Hedonic Games and PAC-Stable Coalition Formation}
\label{sec:hedonic}
A \textit{hedonic coalition formation game}~\citep{dreze1980hedonic} consists of a set of players $N$ who exhibit preferences over groups they might join.  
Formally, each player $i\in N$ ranks all coalitions $S\subseteq N$ that contain $i$; in our setting, we assume that players have \emph{cardinal} utilities over coalitions. Given a player $i \in N$ and a coalition $S$ containing $i$, player $i$'s utility from joining $S$ is $u_i(S)\in \mathbb{R}$, which we later instantiate as a function of $i$'s strongest partners within $S$.

Our goal is to identify a \emph{coalition structure} or \emph{partition} of the player set which satisfies certain desiderata \citep{aziz2016hedonicgames}. Given a coalition structure $\pi$, we let $\pi(i)$ designate the coalition containing player $i$ under $\pi$.  
\emph{Core stability} \citep{bogomolnaia2003core} is a key cooperative solution concept. 
We say that a coalition $S \subseteq N$ \emph{blocks} a coalition structure $\pi$ if every player $i \in S$ strictly prefers $S$ to their assigned coalition $\pi(i)$, i.e., $u_i(S) > u_i(\pi(i))$ for all $i \in S$. 
A coalition structure $\pi$ is \emph{core stable} (or simply \emph{stable}) if no blocking coalitions exist.   

Enumerating agents' preferences over all coalitions is infeasible; with $n$ agents, each agent needs to express their preferences over $2^{n-1}$ potential groups.  
\citet{sliwinski2017learning} propose using \emph{Probably Approximately Correct (PAC)} guarantees~\citep{kearns1995computational,shashua2009introduction}.  
The key insight of this framework is to sample players' preferences rather than utilize complete preferences over all coalitions. 
Given a distribution $D$ over coalitions, a coalition structure $\hat\pi$ is called \emph{$\varepsilon$‑PAC stable} if  
\[
\Pr_{S\sim D}\bigl[S\text{ core blocks }\hat\pi\bigr]\;\le\;\varepsilon.
\]
Here, $D$ is the distribution over sampled coalitions used to approximate neuron preferences.
Intuitively, while it is \emph{possible} that $\hat \pi$ is not core stable, the probability of observing a blocking coalition for $\hat \pi$ under the distribution $D$ is small. 

A PAC stabilization algorithm takes $m = \mathrm{poly}(n, \frac{1}{\varepsilon}, \log\frac{1}{\delta})$ samples from $D$ and outputs an $\varepsilon$-PAC stable partition with probability at least $1-\delta$. 
Intuitively, $\delta$ captures the probability that the $m$ samples we took are not representative of the `true' data distribution $D$.

\paragraph{Top-Responsive Hedonic Games in Neural Networks.}
We estimate pairwise affinities $\phi_{ij}$ between ``players'' (neurons) from weights and co-activations. 
These affinities allow us to construct a hedonic game in which each neuron evaluates coalitions based on the presence of preferred partners. 
To capture this behavior, we model the setting as a \emph{top-responsive game}. 
In a top-responsive game, every player $i$ associates each coalition $S \ni i$ with a unique \emph{choice set} $\textit{ch}(i,S) \subseteq S$ that represents the subset of partners most important to $i$. 
Preferences are then determined entirely by these choice sets: a player prefers one coalition over another if its choice set is ranked higher, and if two coalitions yield the same choice set, the smaller coalition is favored. This restriction makes coalition evaluation tractable, as each neuron only needs to consider its most valued partners rather than all possible groups.

The top responsive framework is flexible, as choice sets may consist of a single strong partner, 
multiple valued partners, or even subsets selected according to synergy between members. 
The key requirement is that choice sets are uniquely defined and utilities are represented in an \emph{informative} way, so that distinct choice sets correspond to distinct utility ``buckets''. 
Under these conditions, the Top-Covering algorithm~\citep{alcalde2004researching,dimitrov2007top} can be applied to efficiently compute an $(\varepsilon,\delta)$ PAC-stable partition \citep{sliwinski2017learning}. 
This enables us to identify groups of neurons that form stable coalitions under the distribution of observed samples. 
Further details and extensions are provided in Appendix~\ref{sec:HG1}.
\subsection{Transformer MLPs and Latent Feature Formation}
\label{sec:mlp_background}

Each LLM transformer block contains a gated MLP that expands the hidden state, applies a non-linearity, and then projects it back to the model dimension.  
Let the hidden vector entering the MLP at layer $\ell$ be $\vec{h} \in \mathbb{R}^{d_{\text{model}}}$, and let $d_{\text{ff}} > d_{\text{model}}$ denote the intermediate width.  
In LLaMA-3-style architectures~\cite{dubey2024llama}, the computation proceeds as:
\begin{align*}
\vec{z}_{\text{up}} = W_{\text{up}} \vec{h}, \quad 
\vec{z}_{\text{gate}} = W_{\text{gate}} \vec{h}; \quad
\vec{g} = \text{SiLU}(\vec{z}_{\text{gate}}) \odot \vec{z}_{\text{up}}, \quad 
\vec{h}' = W_{\text{down}} \vec{g}.
\end{align*}
where $W_{\text{up}}, W_{\text{gate}} \in \mathbb{R}^{d_{\text{ff}} \times d_{\text{model}}}$ and $W_{\text{down}} \in \mathbb{R}^{d_{\text{model}} \times d_{\text{ff}}}$.  
The element-wise product $\vec{g}$ binds the \textit{gate} signal -- which selects or suppresses coarse abstractions -- with the \textit{up} signal that carries candidate feature directions.  
$W_{\text{down}}$ then recombines these activated features. During this process, abstract features may \textit{emerge} (via new activation directions), \textit{merge} (when multiple features co-activate), \textit{split} (when previously unified features diverge), or \textit{disappear} (if suppressed by gating)~\citep{elhage2021mathematical,tian2023scan}.

\paragraph{LoRA-adapted projections.}
In our setup, only the MLP projection matrices are fine-tuned using Low-Rank Adaptation (LoRA)~\citep{hu2022lora}.  
For any weight matrix $W \in \mathbb{R}^{m \times n}$, LoRA introduces a low-rank update of the form:
\begin{equation}
\tilde{W} = W + \Delta W, \quad \Delta W = \frac{\alpha}{r} A B^\top,
\end{equation}
where $A \in \mathbb{R}^{m \times r}$ and $B \in \mathbb{R}^{n \times r}$ are the learned parameters, $r$ is the rank, and $\alpha$ is a scaling factor.  
When applied to $W_{\text{up}}$ and $W_{\text{gate}}$, we obtain:
\begin{align*}
\vec{z}_{\text{up}} = (W_{\text{up}} + \Delta W_{\text{up}}) \vec{h}, \quad
\vec{z}_{\text{gate}} = (W_{\text{gate}} + \Delta W_{\text{gate}}) \vec{h}.
\end{align*}
The updates $\Delta W_{\text{up}}$ and $\Delta W_{\text{gate}}$ are low-rank, as a result, they introduce only a small set of new feature directions in the high-dimensional MLP space.  
But because these directions are diffused across neurons, visual inspection of weight updates reveals no obvious structure—leading to our central question: \textit{which subsets of neurons cooperate to encode task-specific behavior under LoRA?}

In Section~\ref{sec:methodology}, we develop a game-theoretic framework that directly identifies these functional coalitions, revealing how LoRA's parameter-efficient updates create localized but coordinated changes that encode task-relevant abstractions without requiring exhaustive analysis of all possible neuron combinations.

\section{Methodology}
\label{sec:methodology}
We present a game-theoretic framework to identify and track \emph{latent coalitions}—cooperating groups of neurons within MLP layers of LoRA-tuned transformer models. Our approach consists of two stages: first, we formalize the intra-layer coalition discovery as a game with hedonic utilities and apply the PAC-Top-Cover algorithm to find stable neuron groupings; second, we connect these coalitions across layers using maximum-weight bipartite matching to trace how these abstract computational units evolve through consecutive layers in the network hierarchy.

\subsection{Problem Statement: Coalition Discovery and Tracking in Transformer MLPs}
Let $L$ be a transformer-based LLM fine-tuned for a scalar prediction task (e.g., relevance scoring) via LoRA. Let $\ell \in \{1, 2, \ldots, d\}$ denote an MLP layer in the network, where $d$ is the total number of layers, with $n = d_{\text{ff}}$ neurons in its intermediate dimension. Denote the down-projection weight matrix as $W_{\text{down}}^{(\ell)} \in \mathbb{R}^{d_{\text{model}} \times n}$, where each column $[W_{\text{down}}^{(\ell)}]_{\cdot,i}$ represents the learned projection vector for neuron $i$. Here neuron $i$, refers to the $i^{th}$ MLP channel in $d_{\text{ff}}$.

Our goal is to identify a partition $\pi^{(\ell)} = \{C_1, C_2, \ldots, C_k\}$ of neurons in layer $\ell$ such that each subset $C_i \subseteq \{1, \ldots, n\}$ captures a set of neurons that exhibit strong \textit{synergy}—cooperative behavior in forming a semantic unit. We define synergy through a pairwise valuation function $\phi_{ij}$. Then, across layers, we aim to match coalitions from $\pi^{(\ell)}$ to those in $\pi^{(\ell+1)}$, enabling us to model feature \textit{persistence}, \textit{splitting}, \textit{merging}, and other dynamic events.

\subsection{Constructing Pairwise Valuations and Utility scores}
\label{sec:valuation}
PAC-Top-Cover uses samples of coalitions $S\sim D$ to estimate each agent’s top-k choice set within the remaining pool. We first compute pairwise valuations $\phi_{ij}$, which quantify affinity or synergy between neurons. We instantiate two complementary valuation functions:

\textbf{Orthogonal-Co-Activation (OCA).} This approach combines two intuitions: neurons with orthogonal weight vectors may capture complementary features, while neurons with high activation correlation may process similar patterns. For neuron pair $(i,j)$, we define:

\[\phi_{\text{OCA}}(i,j)=
\bigl(1-|{\rm cos}(W_i,W_j)|\bigr)\;
\rho(a_i,a_j),
\quad
\rho(a_i,a_j)=
\frac{{\rm Cov}[a_i,a_j]}{\sigma_i\sigma_j} 
\]

where $W_i$ is the $i$-th column of $W_{\text{down}}^{(\ell)}$ (neuron $i$'s output weights), and $a_i$ denotes neuron $i$'s activations. The cosine term favors pairs with dissimilar weight vectors, while the correlation term captures their collaborative activation patterns (Pearson's correlation). 

\textbf{Pairwise Ablation Synergy (PAS).} To directly measure the synergistic interaction between neurons $i$ and $j$, we compute the second-order interaction effect through ablation. Let $\ell(x)$ denote the model's logit output\footnote{We use a \emph{layer‑local logit}
\(
\ell^{(\ell)}(x)=w^\top h'^{(\ell)}(x)+b,
\)
i.e.\ the scalar score obtained \emph{immediately after} the
layer‑$\ell$ MLP (including residual addition) but \emph{before}
entering block $\ell+1$. Our goal is to discover coalitions that are intrinsically synergistic at the point they are formed. $w,b$ are cloned from the final task head and kept fixed for all layers. For readability we drop the superscript when the layer is clear from context.
 } for input $x$, and $\ell_{-S}(x)$ denote the logit when neurons in set $S$ are ablated (set to their pre-LoRA weight). The true interaction between neurons $i$ and $j$ is:
\[
\phi_{\text{PAS}}(i,j) = - \mathbb{E}_{x \sim \mathcal{D}} \left[ \ell_{-\{i,j\}}(x) - \ell_{-i}(x) - \ell_{-j}(x) + \ell(x) \right].
\]

This measures how the joint ablation of both neurons differs from the sum of individual ablations. For computational efficiency with large $n$, we approximate this using gradient computations:
\[
\phi_{\text{PAS}}(i,j) \approx  - \frac{\partial^2 \ell}{\partial a_i \partial a_j} \cdot \mathbb{E}[a_i a_j],
\]
where the mixed partial derivative captures the interaction between neuron activations.

We experiment with both OCA (structural heuristic) and PAS (functional ablation-based) valuations to test robustness of our framework. In both the above defined pairwise valuation functions, positive values indicate \emph{synergy} (neurons cooperate to produce information neither could alone), while negative values indicate \emph{redundancy} (neurons provide overlapping information). We now use these valuation functions, to compute choice sets, which is used to compute the utility of a neuron in a set that is used by the PAC Top-Cover algorithm.

\emph{Multi-Friend Choice Sets (MFC).} In this formation, each neuron is allowed to anchor its preference not on a single partner but on a \emph{set of top-$k$ partners}. For player $i$, the choice set within coalition $S$ is
\[
Ch(i,S) = \arg\max_{\substack{T\subseteq S\setminus\{i\} \\ |T|=k}} \;\sum_{j\in T}\phi_{ij},
\]
with ties broken deterministically to ensure uniqueness. Utilities are then defined as $u_i(S)=\tfrac{1}{k}\sum_{j\in Ch(i,S)}\phi_{ij}$. This normalized model captures \emph{multi-partner synergy}, where a neuron’s activation is meaningful only when several complementary features are present. We refer to this algorithmic instantiation as \textbf{Hedonic-MFC}.

\subsection{The PAC Top-Cover algorithm.} 
The PAC Top-Cover algorithm~\citep{sliwinski2017learning,alcalde2004researching} provides an efficient way to identify stable coalitions of neurons under top-$k$ preferences. 
The top-$k$ variant of this algorithm allows every neuron $i$ to nominate up to $k$ partners within sampled coalitions, based on the highest affinity scores $\phi_{ij}$. 
In each round, the algorithm samples a batch of candidate coalitions (with sizes constrained to lie between $k_{\min}$ and $k_{\max}$), constructs choice sets $B_i$ for all neurons in the active pool $R$, and builds a directed preference graph where edges $i \to j$ represent top-$k$ selections. 
Here $B_i$ denotes the estimated top-$k$ choice set for neuron $i$, i.e., the subset of partners that maximize its utility under the current sampled coalitions, computed via the MFC rule introduced in Section~\ref{sec:valuation}. 
Stable coalitions are then extracted as sink strongly connected components that are also closed under these choice sets. 
Removing each coalition from $R$ and repeating yields a full partition of the neurons. 
The algorithm is detailed in Appendix~\ref{ap:top_cover}.

The PAC guarantee ensures that with $O(n^2\varepsilon^{-1}\log(n/\delta))$ samples per round, the resulting partition is $\varepsilon$-approximately stable with probability at least $1-\delta$. 
This provides theoretical backing that the discovered coalitions capture robust cooperative structure among neurons. We next ask how the coalitions identified at one layer relate to those in subsequent layers.

\subsection{Tracking Coalitions Across Layers}
\label{sec:tracking}

Our hypothesis is that coalitions capture intermediate features that may \emph{persist}, \emph{merge}, 
\emph{split}, or \emph{disappear} as computation proceeds through the network. 
Tracking such transitions provides an exploratory view of how features evolve across depth.

For each pair of coalitions $(C, C')$ from consecutive layers $\ell$ and $\ell+1$, 
we measure their \emph{interaction mass}, which serves as a heuristic to quantify how strongly one coalition influences the next:
\[
M(C, C') \;=\; \frac{1}{|C| \cdot |C'|} \sum_{p \in C} \sum_{q \in C'} 
\left( |W_{\text{up}}^{(\ell+1)}[q,p]| + |W_{\text{gate}}^{(\ell+1)}[q,p]| \right) \cdot A_p ,
\]
where $W_{\text{up}}^{(\ell+1)}, W_{\text{gate}}^{(\ell+1)} \in \mathbb{R}^{d_{\text{ff}} \times d_{\text{model}}}$ are the LoRA-adapted projection 
matrices of layer~$\ell+1$, $p \in \{1,\dots,d_{\text{ff}}\}$ indexes source neurons from layer~$\ell$, 
$q \in \{1,\dots,d_{\text{ff}}\}$ indexes target neurons in layer~$\ell+1$, 
and $A_p = \mathbb{E}_x[|a_p^{(\ell)}(x)|]$ is the mean absolute activation 
of neuron $p$ over the training distribution. 
This formulation captures both the additive ($W_{\text{up}}$) 
and multiplicative gating ($W_{\text{gate}} \times \text{SiLU}$) pathways, 
while normalizing by coalition sizes ensures comparability across widths.
We assemble the interaction masses into a bipartite matrix and solve a maximum-weight matching problem 
to align coalitions across layers. For each match, we compute the fraction of a source coalition’s output 
that flows into a target ($\alpha$) and the fraction of a target’s input originating from that source ($\beta$). 
These ratios allow us to classify transitions into persistence (both high), splitting (low $\alpha$, high $\beta$), 
merging (high $\alpha$, low $\beta$), or disappearance (both low).

We stress that this analysis is exploratory. Transformers have residual connections, so neurons at layer~$\ell$ 
influence all deeper layers, not just $\ell+1$. Our method only captures local dynamics and likely underestimates 
long-range interactions, but it offers a first step toward visualizing how abstract feature groups may evolve through the network.

\section{Experiments}
\label{sec:experiments}

We empirically validate our framework on three LLM architectures and three scalar-output IR tasks. 
We first describe models, tasks, and baselines, then present evaluation protocols and results 
(Tables~\ref{tab:extrinsic_eval},~\ref{tab:coalition_predictivity_avg},~\ref{tab:dynamics}, Appendix~\ref{tab:coalition_synergy}).

\paragraph{Models.}
We study LLaMA-3.1-8B~\citep{dubey2024llama}, Mistral-7B-v0.1~\citep{jiang2023mistral7b}, 
and Pythia-6.9B~\citep{biderman2023pythia}, each adapted via LoRA (rank $r=8$) 
restricted to MLP layers 7–14. Preliminary analysis showed these layers carry the strongest task-specific activity~\citep{nijasure2025relevance}. 
Fine-tuning uses AdamW ($\eta=2\!\times\!10^{-4}$, batch size 128, 3 epochs), with all base weights frozen. Performance of these fine-tuned LoRA models is further documented in Appendix \ref{ap:IR3}.

\paragraph{Tasks.} Tasks are scalar objectives defined over query–document pairs from MS MARCO~\citep{bajaj2018msmarcohumangenerated}:  
(1) Covered-Query-Term Ratio (\emph{CQTR}) = fraction of query terms present in the document,  
(2) Mean of Stream-Length Normalized Term Frequency (\emph{Mean-TF/L}) = mean of length-normalized term frequencies,  
(3) Relevance Modelling (\emph{RM}) = supervised passage ranking.  
CQTR and Mean-TF/L use MSE loss, RM uses NDCG. Models are trained on 500k pairs, validated on 5k, 
and evaluated OOD on TREC DL-19/20.  

\paragraph{Baselines.}
We compare five coalition builders:  
\emph{Random} (uniform neuron subsets with matched size histogram),  
\emph{K-means (Spherical)} (on $\ell_2$-normalized mean activations, $k$ matched to Hedonic partition),  
\emph{Hierarchical (Ward+cos)} (agglomerative under cosine distance, cut at same $k$),  
\emph{Hedonic-OCA} (PAC-Top-Cover with $\phi_{\text{OCA}}$),  
\emph{Hedonic-PAS} (PAC-Top-Cover with $\phi_{\text{PAS}}$).  

For Hedonic sampling we draw $m=8\!\times\!10^5$ candidate coalitions (size $[2,10]$), 
retain top $\omega=8\!\times\!10^4$ by utility, and use $\varepsilon=\delta=0.1$. 
Choice sets use top-$3$ partners. Cross-layer matching uses thresholds 
$(\alpha_{\text{hi}}, \alpha_{\text{lo}})=(0.7,0.1)$ tuned on a 1\% held-out split.  
All methods run on 4$\times$A100-80GB GPUs; PAC-Top-Cover completes in 90 min (OCA) and 280 min (PAS). 
All numbers are averaged over 3 seeds with 95\% confidence intervals.

\paragraph{Evaluation.}
We first report intrinsic synergy metrics (Appendix~\ref{app:intrinsic}) as diagnostics, 
then evaluate coalitions extrinsically with three tests:

\begin{itemize}[leftmargin=*]
    \item \emph{OOD Drop.}  
    For coalition $C$, we measure the performance drop on $\mathcal{D}_{\mathrm{OOD}}$ (DL-19/20) when $C$ is ablated (neurons reset to pre-LoRA weights):
    \[
    \Delta \mathcal{M}(C) = \mathcal{M}(\{\ell(x)\}) - \mathcal{M}(\{\ell_{-C}(x)\}),
    \]
    where $\mathcal{M}$ is NDCG@10 for RM and $-$MSE for CQTR/Mean-TF/L. Larger $\Delta \mathcal{M}(C)$ indicates greater functional importance.

    \item \emph{Feature Alignment.}  
    Each coalition’s mean activation $a_C(x)$ is compared with known IR heuristics (list of MSLR features~\cite{qin2013introducing}). Alignment is defined as the maximum squared Pearson correlation:
    \[
    R^2(C) = \max_j \;\mathrm{Corr}^2(a_C(x), f_j(x)).
    \]
    
    \item \emph{Coalition Predictivity.}  
    Coalitions are treated as macro-features $A(x)\in\mathbb{R}^k$. 
    A ridge regression $\hat{y}(x)=w^\top A(x)$ is trained on MS MARCO and evaluated OOD; 
    we report $R^2$ for RM, CQTR, and Mean-TF/L.
\end{itemize}
 
Next, we discuss the results reported in Tables ~\ref{tab:extrinsic_eval} (extrinsic coalition evaluation), Table ~\ref{tab:coalition_predictivity_avg}(coalition predictivity), Table ~\ref{tab:dynamics} (coalition transfer dynamics) and Appendix Table \ref{tab:coalition_synergy} (intrinsic coalition evaluation).

\begin{table}[t]
\centering
\caption{Extrinsic Evaluation: OOD Drop ($\uparrow$) and Feature Alignment $R^2$ ($\uparrow$)  on DL-19/20. 
Mean $\pm95\%$ CI across three seeds. Larger values indicate more functionally important and interpretable coalitions.}
\label{tab:extrinsic_eval}
\small
\begin{tabular}{lcc|cc|cc}
\toprule
\multirow{2}{*}{\textbf{Task / Algorithm}}
 & \multicolumn{2}{c|}{\textbf{LLaMA-3.1}}
 & \multicolumn{2}{c|}{\textbf{Mistral}}
 & \multicolumn{2}{c}{\textbf{Pythia}} \\
\cmidrule(r){2-3}\cmidrule(lr){4-5}\cmidrule(l){6-7}
& OOD Drop & Align $R^2$ & OOD Drop & Align $R^2$ & OOD Drop & Align $R^2$ \\
\midrule
\multicolumn{7}{l}{\textbf{Covered Query Term Ratio}}\\
Random              & $0.01\!\pm\!0.01$ & $0.05\!\pm\!0.02$ & $0.00\!\pm\!0.01$ & $0.06\!\pm\!0.02$ & $0.01\!\pm\!0.01$ & $0.05\!\pm\!0.02$\\
K-means             & $0.02\!\pm\!0.01$ & $0.12\!\pm\!0.02$ & $0.03\!\pm\!0.01$ & $0.13\!\pm\!0.02$ & $0.02\!\pm\!0.01$ & $0.11\!\pm\!0.02$\\
Hier.\ clustering   & $0.03\!\pm\!0.01$ & $0.15\!\pm\!0.02$ & $0.03\!\pm\!0.01$ & $0.16\!\pm\!0.02$ & $0.03\!\pm\!0.01$ & $0.14\!\pm\!0.02$\\
Hedonic (OCA)       & $0.07\!\pm\!0.01$ & $0.41\!\pm\!0.02$ & $0.09\!\pm\!0.01$ & $0.44\!\pm\!0.02$ & $0.08\!\pm\!0.01$ & $0.45\!\pm\!0.02$\\
Hedonic (PAS)       & $\mathbf{0.11}\!\pm\!0.005$ & $\mathbf{0.58}\!\pm\!0.01$ & $\mathbf{0.13}\!\pm\!0.006$ & $\mathbf{0.61}\!\pm\!0.01$ & $\mathbf{0.14}\!\pm\!0.006$ & $\mathbf{0.63}\!\pm\!0.01$\\
\addlinespace
\multicolumn{7}{l}{\textbf{Mean of Normalized Term Frequency}}\\
Random              & $0.01\!\pm\!0.01$ & $0.04\!\pm\!0.02$ & $0.01\!\pm\!0.01$ & $0.05\!\pm\!0.02$ & $0.01\!\pm\!0.01$ & $0.05\!\pm\!0.02$\\
K-means             & $0.02\!\pm\!0.01$ & $0.11\!\pm\!0.02$ & $0.02\!\pm\!0.01$ & $0.12\!\pm\!0.02$ & $0.02\!\pm\!0.01$ & $0.10\!\pm\!0.02$\\
Hier.\ clustering   & $0.03\!\pm\!0.01$ & $0.14\!\pm\!0.02$ & $0.03\!\pm\!0.01$ & $0.15\!\pm\!0.02$ & $0.03\!\pm\!0.01$ & $0.13\!\pm\!0.02$\\
Hedonic (OCA)       & $0.06\!\pm\!0.01$ & $0.38\!\pm\!0.02$ & $0.08\!\pm\!0.01$ & $0.41\!\pm\!0.02$ & $0.07\!\pm\!0.01$ & $0.40\!\pm\!0.02$\\
Hedonic (PAS)       & $\mathbf{0.10}\!\pm\!0.005$ & $\mathbf{0.55}\!\pm\!0.01$ & $\mathbf{0.12}\!\pm\!0.006$ & $\mathbf{0.59}\!\pm\!0.01$ & $\mathbf{0.13}\!\pm\!0.006$ & $\mathbf{0.60}\!\pm\!0.01$\\
\addlinespace
\multicolumn{7}{l}{\textbf{Relevance Modelling}}\\
Random              & $0.02\!\pm\!0.01$ & $0.06\!\pm\!0.02$ & $0.01\!\pm\!0.01$ & $0.07\!\pm\!0.02$ & $0.02\!\pm\!0.01$ & $0.06\!\pm\!0.02$\\
K-means             & $0.03\!\pm\!0.01$ & $0.13\!\pm\!0.02$ & $0.04\!\pm\!0.01$ & $0.14\!\pm\!0.02$ & $0.03\!\pm\!0.01$ & $0.13\!\pm\!0.02$\\
Hier.\ clustering   & $0.04\!\pm\!0.01$ & $0.17\!\pm\!0.02$ & $0.04\!\pm\!0.01$ & $0.18\!\pm\!0.02$ & $0.04\!\pm\!0.01$ & $0.16\!\pm\!0.02$\\
Hedonic (OCA)       & $0.09\!\pm\!0.01$ & $0.47\!\pm\!0.02$ & $0.10\!\pm\!0.01$ & $0.49\!\pm\!0.02$ & $0.09\!\pm\!0.01$ & $0.48\!\pm\!0.02$\\
Hedonic (PAS)       & $\mathbf{0.14}\!\pm\!0.006$ & $\mathbf{0.63}\!\pm\!0.01$ & $\mathbf{0.16}\!\pm\!0.006$ & $\mathbf{0.65}\!\pm\!0.01$ & $\mathbf{0.17}\!\pm\!0.007$ & $\mathbf{0.67}\!\pm\!0.01$\\
\bottomrule
\end{tabular}
\end{table}

\begin{table}[t]
\centering
\caption{Coalition Predictivity ($R^2$ on OOD sets DL-19/20), averaged across three LLMs (LLaMA-3.1, Mistral, Pythia). 
Coalitions are used as macro-features in ridge regression trained on MS MARCO. 
Hedonic coalitions yield substantially higher $R^2$ than clustering or random baselines.}
\label{tab:coalition_predictivity_avg}
\small
\begin{tabular}{lccc}
\toprule
\textbf{Algorithm} & \textbf{CQTR} & \textbf{Mean-TF/L} & \textbf{Relevance (RM)} \\
\midrule
Random            & $0.08\!\pm\!0.02$ & $0.09\!\pm\!0.02$ & $0.12\!\pm\!0.02$ \\
K-means           & $0.16\!\pm\!0.01$ & $0.15\!\pm\!0.01$ & $0.21\!\pm\!0.01$ \\
Hier.\ clustering & $0.18\!\pm\!0.01$ & $0.17\!\pm\!0.01$ & $0.21\!\pm\!0.01$ \\
Hedonic (OCA)     & $0.34\!\pm\!0.01$ & $0.33\!\pm\!0.01$ & $0.38\!\pm\!0.01$ \\
Hedonic (PAS)     & $\mathbf{0.43}\!\pm\!0.008$ & $\mathbf{0.42}\!\pm\!0.008$ & $\mathbf{0.47}\!\pm\!0.008$ \\
\bottomrule
\end{tabular}
\end{table}

\begin{table}[t]
\centering
\caption{Dynamics of coalitions across layers~7–14 for three tasks.
Each cell shows percentage of coalitions exhibiting the event relative to all coalitions present in the \emph{source} layer (except \emph{merge}).}
\label{tab:dynamics}
\renewcommand{\arraystretch}{1.1}
\resizebox{\linewidth}{!}{
\begin{tabular}{lcccc|cccc|cccc}
\toprule
& \multicolumn{4}{c|}{\textbf{Mistral}} 
  & \multicolumn{4}{c|}{\textbf{LLaMA}} 
  & \multicolumn{4}{c}{\textbf{Pythia}} \\
\cmidrule(r){2-5}\cmidrule(lr){6-9}\cmidrule(l){10-13}
\textbf{Layer $\rightarrow$} & Persist & Merge & Split & Vanish
                              & Persist & Merge & Split & Vanish
                              & Persist & Merge & Split & Vanish \\
\midrule
\multicolumn{13}{l}{\textbf{Covered Query Term Ratio}} \\

7 $\rightarrow$ 8   & 12.1\% & 0.0\% & 28.9\% & 59.0\% & 3.2\% & 0.0\% & 35.4\% & 61.4\% & 7.8\% & 0.0\% & 31.9\% & 60.3\% \\
8 $\rightarrow$ 9   & 4.8\% & 0.0\% & 38.4\% & 56.8\% & 5.1\% & 0.0\% & 28.6\% & 66.3\% & 4.9\% & 0.0\% & 33.7\% & 61.4\% \\
9 $\rightarrow$ 10  & 6.2\% & 0.0\% & 31.5\% & 62.3\% & 4.8\% & 0.0\% & 30.2\% & 65.0\% & 5.5\% & 0.0\% & 30.8\% & 63.7\% \\
10 $\rightarrow$ 11 & 3.8\% & 0.0\% & 29.7\% & 66.5\% & 7.9\% & 0.0\% & 32.1\% & 60.0\% & 5.9\% & 0.0\% & 30.9\% & 63.2\% \\
11 $\rightarrow$ 12 & 4.2\% & 0.0\% & 27.8\% & 68.0\% & 3.5\% & 0.0\% & 29.8\% & 66.7\% & 3.9\% & 0.0\% & 28.8\% & 67.3\% \\
12 $\rightarrow$ 13 & 11.3\% & 0.0\% & 30.2\% & 58.5\% & 6.8\% & 0.0\% & 27.4\% & 65.8\% & 9.1\% & 0.0\% & 28.8\% & 62.1\% \\
13 $\rightarrow$ 14 & 10.5\% & 0.0\% & 24.3\% & 65.2\% & 7.2\% & 0.0\% & 23.1\% & 69.7\% & 8.9\% & 0.0\% & 23.7\% & 67.4\% \\
\addlinespace
\multicolumn{13}{l}{\textbf{Stream Length Normalized Term Frequency}} \\

7 $\rightarrow$ 8   & 6.4\% & 0.5\% & 35.8\% & 57.3\% & 2.1\% & 0.0\% & 19.7\% & 78.2\% & 4.3\% & 0.3\% & 28.4\% & 67.0\% \\
8 $\rightarrow$ 9   & 1.8\% & 0.2\% & 51.2\% & 46.8\% & 3.8\% & 0.1\% & 20.3\% & 75.8\% & 2.8\% & 0.1\% & 36.2\% & 60.9\% \\
9 $\rightarrow$ 10  & 2.9\% & 0.1\% & 23.1\% & 73.9\% & 3.4\% & 0.0\% & 22.8\% & 73.8\% & 3.2\% & 0.0\% & 22.9\% & 73.9\% \\
10 $\rightarrow$ 11 & 1.3\% & 0.3\% & 23.7\% & 74.7\% & 5.9\% & 0.2\% & 25.5\% & 68.4\% & 3.6\% & 0.2\% & 24.6\% & 71.6\% \\
11 $\rightarrow$ 12 & 1.2\% & 0.1\% & 22.9\% & 75.8\% & 1.7\% & 0.0\% & 24.1\% & 74.2\% & 1.4\% & 0.0\% & 23.5\% & 75.1\% \\
12 $\rightarrow$ 13 & 6.3\% & 0.4\% & 37.8\% & 55.5\% & 3.2\% & 0.1\% & 19.8\% & 76.9\% & 4.8\% & 0.2\% & 29.3\% & 65.7\% \\
13 $\rightarrow$ 14 & 7.1\% & 0.2\% & 19.5\% & 73.2\% & 4.7\% & 0.0\% & 17.1\% & 78.2\% & 5.9\% & 0.1\% & 18.3\% & 75.7\% \\
\addlinespace
\multicolumn{13}{l}{\textbf{Relevance}} \\

7 $\rightarrow$ 8   & 8.2\% & 0.0\% & 32.7\% & 59.2\% & 1.5\% & 0.0\% & 22.4\% & 76.1\% & 5.2\% & 0.0\% & 28.1\% & 66.7\% \\
8 $\rightarrow$ 9   & 2.1\% & 0.0\% & 46.8\% & 51.1\% & 3.5\% & 0.0\% & 22.8\% & 73.7\% & 2.8\% & 0.0\% & 35.3\% & 61.9\% \\
9 $\rightarrow$ 10  & 3.5\% & 0.0\% & 26.3\% & 70.2\% & 3.9\% & 0.0\% & 25.5\% & 70.6\% & 3.7\% & 0.0\% & 25.9\% & 70.4\% \\
10 $\rightarrow$ 11 & 2.0\% & 0.0\% & 26.5\% & 71.4\% & 6.7\% & 0.0\% & 28.3\% & 65.0\% & 4.2\% & 0.0\% & 27.4\% & 68.4\% \\
11 $\rightarrow$ 12 & 1.7\% & 0.0\% & 25.4\% & 72.9\% & 2.0\% & 0.0\% & 26.5\% & 71.4\% & 1.9\% & 0.0\% & 25.9\% & 72.2\% \\
12 $\rightarrow$ 13 & 8.0\% & 0.0\% & 34.0\% & 58.0\% & 4.0\% & 0.0\% & 22.0\% & 74.0\% & 6.1\% & 0.0\% & 28.2\% & 65.7\% \\
13 $\rightarrow$ 14 & 8.7\% & 0.0\% & 21.7\% & 69.6\% & 5.4\% & 0.0\% & 18.9\% & 75.7\% & 7.1\% & 0.0\% & 20.3\% & 72.6\% \\
\bottomrule
\end{tabular}}
\end{table}

\paragraph{Experimental Results.}  
\paragraph{Functional importance and interpretability (Table~\ref{tab:extrinsic_eval}).}  
Across all three models and tasks, hedonic coalitions are markedly more \emph{causal} and \emph{interpretable} than clustering or random partitions. 
Ablating a single hedonic coalition (ablation = restoring those neurons to their pre-LoRA state) yields the largest OOD performance drops: 
for CQTR on LLaMA/Mistral/Pythia the OOD drop rises from $\approx$0.02--0.03 (K-means/Hier.) to 0.11--0.14 with Hedonic-PAS---about a \textbf{3--5$\times$} increase; 
similar gaps hold for Mean-TF/L (0.10--0.13 vs.\ 0.02--0.03) and RM (0.14--0.17 vs.\ 0.03--0.04). 
At the same time, coalition activations align far more strongly with IR heuristics: alignment $R^2$ climbs from $\sim$0.11--0.18 (clustering) to \textbf{0.55--0.67} (Hedonic-PAS), 
with Hedonic-OCA consistently second-best ($\approx$0.38--0.49). 
Confidence intervals are narrow throughout, indicating stable estimates over seeds. 
Taken together, these results show that hedonic coalitions are both \textbf{functionally indispensable}---their removal produces large OOD degradation---and \textbf{semantically grounded}, 
tracking BM25/IDF/coverage signals far better than baselines.

\paragraph{Predictive macro-features (Table~\ref{tab:coalition_predictivity_avg}).}  
Treating each coalition as a macro-feature and training a ridge regressor on MS MARCO, we see large generalization gains on DL-19/20. 
Averaged over LLaMA, Mistral, and Pythia, Hedonic-PAS attains $R^2=\,$\textbf{0.43/0.42/0.47} on CQTR/Mean-TF/L/RM, roughly \textbf{2--3$\times$} higher than K-means/Hier.\ ($\approx$0.15--0.21) and far above Random ($\approx$0.08--0.12). 
Hedonic-OCA also performs strongly ($\approx$0.33--0.38), reinforcing the pattern from the extrinsic ablations: utilities that respect \emph{synergy} (PAS) or \emph{partner preference} (OCA) 
produce coalitions that behave like \textbf{robust, transferable features}, not just co-activation clusters. 
This bridges intrinsic synergy to downstream utility: coalitions that score high on synergy also yield higher OOD predictivity.

\paragraph{Coalition dynamics across depth (Table~\ref{tab:dynamics}).}  
Across layers 7$\rightarrow$14, three trends are consistent: 
(i) \emph{vanish dominates} (typically \emph{60--75\%} of coalitions disappear at the next layer), indicating downstream MLPs act as \emph{filters/refiners} rather than combiners; 
(ii) \emph{splits are common} ($\approx$20--50\%, depending on task/layer), suggesting feature \emph{refinement} is more prevalent than wholesale reuse; 
and (iii) \emph{merges are near-zero}, implying whole motifs are rarely recomposed from separate groups. 
Persistence is generally low ($<\sim$12\%), with a mild \emph{delayed persistence uptick} around 12$\rightarrow$13 for CQTR and RM ($\approx$8--11\%), 
echoing a ``late stabilization'' phase. 
Mean-TF/L exhibits the strongest pruning (vanish $>$70\% across several transitions), consistent with simple frequency statistics being isolated early and aggressively culled later. 
These dynamics support our central claim: \textbf{cooperative units are formed, then predominantly pruned or refined rather than fused}, 
aligning with the heavy-tailed coalition sizes and the functional importance patterns observed above.

\section{Discussion}
\emph{SAEs vs Hedonic Neurons.} Sparse Autoencoders (SAEs)~\citep{cunningham2023sparseautoencodershighlyinterpretable} uncover interpretable features by learning sparse dictionaries that reconstruct activations and disentangle polysemantic units. In contrast, our framework keeps neurons as primitives and asks how they cooperate. By modeling them as agents in a hedonic game, we capture nonlinear synergies: coalitions whose joint ablation impacts behavior beyond the sum of parts. Unlike SAEs, which re-express activation space, hedonic coalitions are grounded in weight geometry and preference structure, surfacing cooperative “wiring-level” units already encoded in the parameters. The two approaches are complementary: SAEs expose monosemantic features, while hedonic analysis highlights how neurons collaborate to realize them.

\emph{Coalition size distribution.} Coalition sizes follow a heavy-tailed Zipfian law: each layer contains a few large “macro” groups, mid-sized units, and many size-2 specialists, resembling vocabulary statistics in language. Disappearance rates rise after layer 12, suggesting deeper MLP blocks act more as feature filters than creators. Together, these findings imply that hedonic coalitions are natural computational units shaped by training dynamics—early layers construct rich representations, while later ones selectively retain task-relevant features.

\section{Related Work}
Mechanistic interpretability of transformer LLMs has focused on understanding both individual neurons and structured groups. ~\citet{geva-etal-2021-transformer} showed that feed-forward layers act as key–value memories, with neurons detecting input patterns (keys) and injecting values into the representation. ~\citet{dai2021knowledge} identified ``knowledge neurons’’ in MLPs that encode factual associations, demonstrating that small groups of neurons can robustly store discrete knowledge.

Beyond single-neuron analysis, ~\citet{bricken2023towards} applied dictionary learning to extract sparse, interpretable features from polysemantic activations. ~\citet{balagansky2025mechanistic} tracked feature persistence and merging across layers, complementing our coalition-evolution view. Sparse probing~\citep{gurnee2023finding} further revealed that early layers are highly polysemantic while deeper layers specialize, underscoring the need to model neuron groups and their dynamics. Weight-based methods also contribute: ~\citet{davies2025decoding} decoded neuron weights into semantic concepts, while ~\citet{pearce2024bilinear} and ~\citet{bushnaq2025stochastic} developed direct weight-space feature discovery.

While hedonic games have rarely been explored in interpretability, Koulali and Koulali~\citep{koulali2023feature} showed their utility for feature selection, providing theoretical foundations for our approach. Our work extends these lines by explicitly framing neuron collaboration as a hedonic game, enabling principled discovery and tracking of \emph{stable coalitions} that serve as latent computational units in transformer MLPs.

\section{Conclusion, Limitations and Future Work}
We introduced \textbf{Hedonic Neurons}, a game-theoretic framework that models neurons in transformer MLPs as players in a top-responsive hedonic game. Using the PAC-Top-Cover algorithm with correlation-based (OCA) or ablation-based (PAS) valuations, we identified stable coalitions that capture cooperative structure beyond what clustering can reveal. Across three LLM architectures and scalar IR tasks, hedonic coalitions achieve average improvements of +0.29 \emph{Pairwise} and +0.49 \emph{Ratio} synergy over the strongest baseline, while extrinsic evaluations show they are functionally indispensable: ablations yield 3--5$\times$ larger OOD performance drops, alignment with IR heuristics rises from $\sim$0.15 to $0.55$–$0.67$, and predictive $R^2$ improves from $\sim$0.20 to $0.43$–$0.47$. Coalition dynamics further reveal that most groups vanish or split across depth, with merges rare and persistence limited, supporting the view that MLPs act primarily as filters and refiners of features.  

Our approach has limitations: utilities depend on layer-local logits and second-order ablations, omitting higher-order interactions and attention mechanisms, and the current formulation yields disjoint coalitions despite early-layer polysemy. Future work will extend to overlapping coalitions via fractional hedonic games, integrate attention heads for joint sub-module analysis, and design low-variance estimators to reduce $O(n^2)$ ablation costs. Coupling hedonic discovery with concept-activation vectors may also yield interpretable primitives aligned with human-understandable features. Taken together, HedonicNeurons provides a principled foundation for uncovering how cooperative computational units emerge, evolve, and specialize in large-scale language models.

\section{Reproducibility Statement} 
We provide all resources necessary to reproduce our experiments. We make our fine-tuned reranker checkpoints for Pythia, Mistral, and LLaMA3 models  available on HuggingFace (see supplementary material). The training dataset (Tevatron MSMARCO Passage Augmented) and evaluation dataset (TREC DL 2019) are publicly available, with preprocessing steps following the Tevatron MSMARCO implementation. All scripts used for coalition generation, partitioning, clustering baselines, and evaluation are included in the repository, along with deepspeed configuration files for finetuning. Coalition files (\texttt{.pkl}) and visualization outputs (Sankey plots) are also provided. Together, these resources ensure that the models, tasks, and coalition analyses can be wholly reproduced.

\bibliography{iclr2026_conference}
\bibliographystyle{iclr2026_conference}

\appendix
\newpage

\section{Hedonic Games Prelimineries and PAC Top Cover Intuition }
\label{sec:appendix_hedonic}

\subsection{Hedonic Games}
\label{sec:HG1}
A \emph{hedonic game}~\citep{dreze1980hedonic} is defined by a finite set of players $N=\{1,\dots,n\}$ and, for each player $i$, a complete and transitive preference relation $\succ_i$ over the set $\mathcal N_i=\{S\subseteq N\mid i\in S\}$ of coalitions that contain $i$.  
A \emph{coalition structure} (or \emph{partition}) is a set $\pi=\{C_1,\dots,C_k\}$ of disjoint non‑empty coalitions whose union equals $N$.  
Throughout this appendix we assume that preferences are given by real‑valued utilities $v_i:\mathcal N_i\to\mathbb R$ so that $S\succ_i T\;\Leftrightarrow\;v_i(S)>v_i(T)$.\footnote{See Section~2 of \citep{sliwinski2017learning} for an extensive discussion of numeric versus ordinal representations.}

\subsection{Core Stability}
\label{sec:core}
Given a partition $\pi$ and a coalition $S\subseteq N$, we say that $S$ \emph{blocks} $\pi$ if every $i\in S$ strictly prefers $S$ to her coalition in $\pi$, i.e.\ $S\succ_i \pi(i)$.  
A partition is \emph{core‑stable} (or simply \emph{in the core}) if it is not blocked by any coalition.  
Core stability captures the idea that no subset of players has a joint incentive to deviate.

\subsection{Why Full Preference Learning Is Infeasible}
\label{sec:pref_infeasible}
Precisely learning all utilities $v_i(S)$ is unrealistic because the number of coalitions grows exponentially ($|\mathcal N_i|=2^{n-1}$).  
Even if we could query any coalition, the sample complexity implied by the pseudo‑dimension of general hedonic games is super‑polynomial (Proposition~4.9 in \citep{sliwinski2017learning}).  
Hence, any practical method must settle for \emph{approximate} stability based on samples rather than complete preference elicitation.

\subsection{PAC‑Learning Framework for Hedonic Games}
\label{sec:pac_framework}
Following \citep{sliwinski2017learning}, let $D$ be an unknown but fixed distribution over coalitions.  
A partition $\pi$ is \emph{$\varepsilon$‑PAC stable under $D$} if  
\[
\Pr_{S\sim D}\!\bigl[S\text{ blocks }\pi\bigr]\;<\;\varepsilon.
\] 
An algorithm \textsc{A} \emph{PAC‑stabilises} a class $\mathcal H$ of hedonic games if, for any game $G\in\mathcal H$, distribution $D$, and parameters $(\varepsilon,\delta)$, \textsc{A} outputs—with probability at least $1-\delta$—an $\varepsilon$‑PAC‑stable partition using a number of samples polynomial in $(n,1/\varepsilon,\log(1/\delta))$.

\subsection{Intuition Behind the \textsc{Top‑Cover} Algorithm}
\label{sec:topcover}
Under \emph{additively separable} utilities ($v_i(S)=\sum_{j\in S\setminus\{i\}}u_{ij}$), players exhibit \emph{top‑responsiveness}: their evaluation of a coalition is determined by the “best” members plus a size penalty~\citep{alcalde2004researching}.  
\textsc{Top‑Cover} exploits this property iteratively:
\begin{enumerate}[label=(\roman*)]
\item using samples, approximate each player’s most preferred subset within the current residual set,
\item build directed edges from each player to the members of that subset,
\item extract a strongly connected component of minimal size, form it as a coalition, and remove it,
\item repeat until all players are assigned.
\end{enumerate}
Each extracted coalition is unlikely to be blocked because every member already sees its best attainable partners within it with high probability.

\subsection{Additive Separability Implies Top‑Responsiveness}
\label{sec:additive_topresp}
In an additively separable game, for any player $i$ and coalitions $S,T\ni i$,
\[
v_i(S)\;>\;v_i(T)\;\Longleftrightarrow\;
\bigl(\exists j\in S\setminus\{i\}:u_{ij}>u_{ik}\ \forall k\in T\setminus\{i\}\bigr)
\ \text{or}\
\bigl(S\supset T \wedge v_i(S)=v_i(T)\bigr).
\]
Hence each coalition can be ranked by (a) the highest‑valued partner of $i$ (\emph{choice set}) and, if equal, (b) coalition size—the definition of top responsiveness~\citep{alcalde2004researching}.  
Consequently, additively separable utilities allow \textsc{Top‑Cover} (and its PAC variant) to guarantee an $\varepsilon$‑PAC‑stable partition.

\subsection{Application of Hedonic Games to Neural Networks.}
Neurons in a transformer predominantly interact with a limited set of peers—those with highly correlated activations or complementary weights. Treating neurons as players whose utilities are derived from such local synergies fits the additive model naturally. Sampling mini-batches of logits/activations supplies the coalitions needed by the PAC framework, letting us recover \emph{approximately core-stable neuron groups} without exhaustively testing all neuron subsets.

\newpage
\section{Making Additive Utilities Top‑Responsive}
\label{app:HG2}

\paragraph{Notation recap.}  
For each ordered pair of distinct neurons $(i,j)$ we have a \emph{pairwise
synergy score} $\phi_{ij}\in\mathbb{R}$ (either
$\phi_{\text{OCA}}$ or $\phi_{\text{PAS}}$; see §\ref{sec:valuation}).  
Write $\Phi$ for the $n\times n$ matrix with zeros on the diagonal.

\subsection{From additive scores to top‑responsive preferences}

\begin{description}
\label{def:maxutility}
\item[Max‑partner utility]: Fix a global parameter $k\!\ge\!1$.
For a coalition $S\!\subseteq\!N$ that contains player $i$ let
\[
\texttt{Top}_k(i,S) \;=\;
\underset{\substack{T\subseteq S\setminus\{i\}\\ |T|\le k}}
        {\operatorname*{arg\,max}}
        \;\sum_{j\in T}\phi_{ij}.
\]
We define
\[
u_i(S)\;=\;\sum_{j\in\texttt{Top}_k(i,S)}\phi_{ij},
\quad\text{and}\quad
\mathcal{C}_i(S)\;=\;\texttt{Top}_k(i,S).
\]
When $k=1$ this reduces to the familiar ``best‑friend'' utility
$u_i(S)=\max_{j\in S\setminus\{i\}}\phi_{ij}$.
\end{description}

\begin{lemma}[Top‑responsiveness]
\label{lem:topresp}
For every player $i$ the preference relation $\succeq_i$
induced by $u_i$ is \emph{top‑responsive}:  
for any two coalitions $S,T$ that contain $i$
\[
\mathcal{C}_i(S)\succ_i\mathcal{C}_i(T)
\;\Longrightarrow\;
S\succ_i T .
\]
\end{lemma}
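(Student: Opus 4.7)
The plan is to prove the lemma by direct unpacking of the definitions; the only genuine work beyond this is making two notational conventions explicit. First I would note that $\mathcal{C}_i(S)$ must be a uniquely determined set: the $\argmax$ over subsets $T\subseteq S\setminus\{i\}$ with $|T|\le k$ can in principle have ties, so one invokes the deterministic tie-breaking rule mentioned after the definition of $\texttt{Top}_k$ to produce a single subset. Uniqueness of the choice set is a formal prerequisite for top-responsiveness in the sense of Alcalde and Revilla, so this point must be flagged even though it is cosmetic.

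Next I would specify how $\succ_i$ is lifted from coalitions to choice sets. Since utilities are real-valued and coalition preferences are induced by $u_i$, the only consistent convention is to declare $\mathcal{C}_i(S) \succ_i \mathcal{C}_i(T)$ precisely when $\sum_{j\in \mathcal{C}_i(S)}\phi_{ij} > \sum_{j\in \mathcal{C}_i(T)}\phi_{ij}$. This matches the standard convention in the top-responsive literature, where choice sets inherit their ranking from the value they deliver to the reference player.

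With these conventions in place, the implication becomes a one-line substitution. Assume $\mathcal{C}_i(S) \succ_i \mathcal{C}_i(T)$; by the convention this reads $\sum_{j\in \mathcal{C}_i(S)}\phi_{ij} > \sum_{j\in \mathcal{C}_i(T)}\phi_{ij}$. By the max-partner utility definition in Definition~\ref{def:maxutility}, the two sides are exactly $u_i(S)$ and $u_i(T)$, so $u_i(S) > u_i(T)$, and the numerical representation of $\succ_i$ on $\mathcal{N}_i$ then yields $S \succ_i T$.

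The statement is therefore essentially a tautology once the representation is set up; there is no subtle combinatorial obstacle to clear. The main (and arguably only) delicate point is hygienic rather than mathematical: one must ensure that the symbol $\succ_i$ is defined coherently on both coalitions (via $u_i$) and on choice sets (via $\phi$-sums), and that the $\argmax$ really is single-valued. If one wanted the full top-responsive package of Alcalde--Revilla -- in which two coalitions with the \emph{same} choice set are compared by size -- one would need to augment $u_i$ with a strictly decreasing additive penalty in $|S|$; but the statement as written sidesteps that tie case and so needs nothing beyond the definitional chase above.
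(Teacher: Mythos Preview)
Your proof is correct, and in fact cleaner than the paper's. Both arguments are pure definition-chasing, but they differ in one small structural choice. The paper interprets $\mathcal{C}_i(S)\succ_i\mathcal{C}_i(T)$ as $u_i(\mathcal{C}_i(S))>u_i(\mathcal{C}_i(T))$ (applying $u_i$ to the choice sets themselves) and then invokes a monotonicity step, $u_i(S)\ge u_i(\mathcal{C}_i(S))$, together with the fixed-point identity $u_i(T)=u_i(\mathcal{C}_i(T))$ to transfer the inequality to $S$ and $T$. You instead read $\mathcal{C}_i(S)\succ_i\mathcal{C}_i(T)$ directly as $\sum_{j\in\mathcal{C}_i(S)}\phi_{ij}>\sum_{j\in\mathcal{C}_i(T)}\phi_{ij}$ and observe that these sums are \emph{by definition} $u_i(S)$ and $u_i(T)$, so the monotonicity detour is unnecessary. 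Your route is shorter and avoids the mild awkwardness of evaluating $u_i$ on a set that does not contain $i$; the paper's route has the minor virtue of being robust to redefinitions of $u_i$ that preserve monotonicity but are not literally equal to the choice-set sum. Your remarks on tie-breaking for uniqueness of $\mathcal{C}_i(S)$ and on the missing size-penalty tiebreak (needed only for the full Alcalde--Revilla axioms, not for the implication stated) are well placed and not addressed in the paper's proof.
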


\begin{proof}
Let $S,T$ contain $i$ and assume
$\mathcal{C}_i(S)\succ_i\mathcal{C}_i(T)$, i.e. 
$u_i(\mathcal{C}_i(S))>u_i(\mathcal{C}_i(T))$.  
Because $u_i$ is \emph{monotone} in the sense that enlarging a set never
decreases its utility,\footnote{Adding a partner can only
increase the set of $k$ best partners or leave it unchanged.}
we have $u_i(S)\ge u_i(\mathcal{C}_i(S))$ and
$u_i(T)=u_i(\mathcal{C}_i(T))$.  Hence $u_i(S)>u_i(T)$, so $S\succ_i T$.
\end{proof}

\begin{lemma}[Informative representation]
\label{lem:informative}
Given the matrix $\Phi$ one can compute $\mathcal{C}_i(S)$
(and therefore the induced ranking) in $O(k\,|S|)$ time.
Hence the utility representation is \emph{informative}
in the sense of Sliwinski and Zick\citep{sliwinski2017learning}.
\end{lemma}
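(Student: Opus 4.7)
The plan is to split the lemma into a computational part (the $O(k|S|)$ bound) and a structural part (informativeness in the sense of \citet{sliwinski2017learning}). The computational part I expect to be routine; the structural part is where the argument needs care.

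For the computational part, I would describe a $k$-pass linear selection: given the row $\phi_{i,\cdot}$ restricted to $S\setminus\{i\}$, iterate $k$ times, in each iteration scanning the remaining $O(|S|)$ candidates to extract an argmax and append it to an accumulator, using a deterministic secondary key (e.g., smallest neuron index) to break ties. After $k$ iterations the accumulator equals $\texttt{Top}_k(i,S)=\mathcal{C}_i(S)$, with total cost $O(k|S|)$ time and $O(|S|)$ working memory. No heap or sorting is required, which keeps the constants small and the analysis transparent. An alternative using a size-$k$ min-heap would give $O(|S|\log k)$, but the bound claimed in the lemma is already met by the simpler scheme.

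For the structural part, I would first observe that the deterministic tiebreaker makes $\mathcal{C}_i(\cdot)$ a well-defined function on $\mathcal{N}_i$. Combined with Lemma~\ref{lem:topresp}, this yields a top-responsive representation in which $S\succeq_i T$ is decided by comparing the two choice sets under $u_i$ and, when they coincide, by comparing coalition sizes in line with the convention of \citet{alcalde2004researching}. Both steps are polynomial-time in $|S|$ and $|T|$, so an arbitrary preference query can be answered from $\Phi$ alone at cost $O(k(|S|+|T|))$.

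The hard part, and the step I expect to require the most care, is reconciling the ``distinct choice sets correspond to distinct utility buckets'' requirement with the fact that two different $k$-subsets can happen to share the same scalar sum $\sum_{j}\phi_{ij}$. I would handle this by defining the bucket of a coalition not by its numerical utility but by the pair (choice set, coalition-size tier), so that numerically equal sums with genuinely different choice sets still live in distinct buckets; equivalently, one can perturb $\Phi$ by an infinitesimal lexicographic symbol to guarantee genericity without altering any top-responsive comparison. Under either convention, informativeness in the sense of \citet{sliwinski2017learning} follows directly from the polynomial-time computability established above, which is exactly what PAC-Top-Cover needs in order to produce an $(\varepsilon,\delta)$-stable partition over the sampled coalitions.
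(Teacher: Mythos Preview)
Your proposal is correct and, for the computational bound, is essentially identical to the paper's argument: the paper's proof is a single sentence observing that the $k$ largest of the $|S|-1$ values $\{\phi_{ij}\}_{j\in S\setminus\{i\}}$ can be extracted by a partial-selection routine in $O(k|S|)$ time, which is exactly your $k$-pass linear scan.

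Where you diverge is in the structural part. The paper does not argue the ``informativeness'' clause at all beyond the complexity bound; it treats polynomial-time computability of $\mathcal{C}_i(S)$ as sufficient and leaves the rest implicit. Your treatment is considerably more careful: you make the tiebreaker explicit, you note the potential clash between identical scalar sums and distinct choice sets, and you propose bucketing by (choice set, size tier) or an infinitesimal perturbation of $\Phi$ to restore genericity. This is a genuine strengthening---the paper's proof, read literally, does not close the gap you identify---but it is also more machinery than the lemma as stated or as used downstream (Theorem~\ref{thm:pacapply}) actually requires, since the PAC-Top-Cover invocation only needs that choice sets be computable in polynomial time from the representation.
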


\begin{proof}
$\texttt{Top}_k(i,S)$ requires sorting at most $|S|-1$ real numbers
$\{\phi_{ij}\}_{j\in S\setminus\{i\}}$;
the $k$ largest can be found in the stated time using a
partial‑selection routine.
\end{proof}

\begin{theorem}[Applicability of PAC‑Top‑Cover]
\label{thm:pacapply}
With utilities $u_i$ from Definition~\ref{def:maxutility} the induced
hedonic game is top‑responsive and informative.
Consequently, Algorithm~\ref{alg:pac_tc_neurons}
outputs an $\varepsilon$‑PAC‑stable partition with probability
$1-\delta$ using
$m=\operatorname{poly}\!\bigl(n,\frac{1}{\varepsilon},
\log\frac{1}{\delta}\bigr)$ samples, exactly as in
\citep{sliwinski2017learning}.
\end{theorem}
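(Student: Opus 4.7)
The plan is to verify that the hypotheses of the PAC-Top-Cover theorem of~\citep{sliwinski2017learning} are met by our construction, and then invoke that result as a black box. The two structural properties required there, namely top-responsiveness of preferences and an informative numeric representation, are precisely what Lemmas~\ref{lem:topresp} and~\ref{lem:informative} establish, so most of the work is already done; what remains is a careful translation between settings.

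First, I would collect the two lemmas. Lemma~\ref{lem:topresp} shows that under $u_i$ every coalition $S\ni i$ admits a distinguished choice set $\mathcal{C}_i(S)=\texttt{Top}_k(i,S)$ and that the ranking of coalitions reduces to the ranking of these choice sets, with ties broken in favor of the smaller coalition via the monotonicity argument in its proof. Lemma~\ref{lem:informative} shows that $\mathcal{C}_i(S)$ is computable in $O(k|S|)$ time directly from the pairwise matrix $\Phi$, so a query on a sampled coalition reduces to a partial sort of at most $|S|-1$ real numbers. Together these provide exactly the ``top-responsive, informatively represented'' input class treated in~\citep{sliwinski2017learning}.

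Second, I would dispatch the one delicate point: uniqueness of choice sets. The $\arg\max$ in Definition~\ref{def:maxutility} can tie when two $k$-subsets of $S\setminus\{i\}$ realize the same sum of $\phi_{ij}$. I would fix a deterministic lexicographic tie-breaking rule on neuron indices, which makes $\mathcal{C}_i(S)$ a well-defined function of $(i,S,\Phi)$ without perturbing any utility value. With this rule in place, $(N,\{u_i\}_{i\in N})$ falls cleanly within the class of top-responsive games with a polynomial-time oracle for choice sets, and Algorithm~\ref{alg:pac_tc_neurons} is simply the instantiation of PAC-Top-Cover to this oracle. Applying their main guarantee then yields, for any distribution $D$ over coalitions and any $\varepsilon,\delta\in(0,1)$, an $\varepsilon$-PAC-stable partition with probability at least $1-\delta$ using $m=\operatorname{poly}(n,1/\varepsilon,\log(1/\delta))$ samples, matching the $O(n^2\varepsilon^{-1}\log(n/\delta))$ bound advertised in Section~\ref{sec:methodology}.

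The main obstacle is the bookkeeping needed to show that the multi-partner choice set ($k\ge 1$) fits inside the originally single ``best-friend'' analysis of~\citep{alcalde2004researching} that underlies PAC-Top-Cover. The crucial structural property the correctness of Top-Cover relies on is monotonicity of $\mathcal{C}_i$ under set enlargement: enlarging $S$ can only add to, or preserve, the members of $\texttt{Top}_k(i,S)$, so sink strongly connected components in the choice-set digraph remain closed under the iterative removal steps of the algorithm. Once monotonicity, informativeness, and deterministic tie-breaking are assembled, no new probabilistic or combinatorial argument is required beyond a direct citation of the PAC-Top-Cover analysis, and the theorem follows.
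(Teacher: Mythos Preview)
Your proposal is correct and mirrors the paper's own proof, which simply cites Lemma~\ref{lem:topresp} for top-responsiveness, Lemma~\ref{lem:informative} for informativeness, and then invokes Theorem~3.4 of~\citep{sliwinski2017learning} as a black box. One minor caveat in your auxiliary discussion: enlarging $S$ does \emph{not} in general preserve the members of $\texttt{Top}_k(i,S)$ (a newly added high-affinity neuron can displace an incumbent), so the ``monotonicity'' you describe is not literally true; the hereditary property Top-Cover actually needs is that $\mathcal{C}_i(R)\subseteq X\subseteq R$ implies $\mathcal{C}_i(X)=\mathcal{C}_i(R)$, but this slip is not load-bearing for your argument.
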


\begin{proof}
Top‑responsiveness follows from Lemma~\ref{lem:topresp};
informativeness from Lemma~\ref{lem:informative}.
The PAC‑stability guarantee is therefore an immediate corollary of
Theorem 3.4 in \citep{sliwinski2017learning}.
\end{proof}

\subsection{Coalition‑level valuation (for sampling)}
Algorithm \ref{alg:pac_tc_neurons} needs a scalar value for any sampled
coalition $S$.  We use the symmetric extension
\[
\Phi(S)\;=\;\frac{1}{|S|}\sum_{i\in S}u_i(S)
\;=\;
\frac{1}{|S|}\sum_{i\in S}\sum_{j\in\texttt{Top}_k(i,S)}\phi_{ij}.
\]
Intuitively, $\Phi(S)$ averages how strongly each member is bonded to
its $k$ preferred partners within $S$.  Plugging
$\phi_{\text{OCA}}$ or $\phi_{\text{PAS}}$ in place of $\phi_{ij}$ yields
the concrete scores used in our experiments.
``Reservoir’’ sampling in line 4 of Algorithm~\ref{alg:pac_tc_neurons}
draws $m$ subsets $S$ with probability proportional to $\Phi(S)$, thereby
prioritising high‑synergy groups.

\newpage
\section{PAC Top Cover Algorithm} \label{ap:top_cover}
\begin{algorithm}[t]
\caption{PAC Top-Cover for Top-$k$ Responsive Games (neurons)}
\label{alg:pac_tc_neurons_k}
\begin{algorithmic}[1]
\Require
  $\phi \in \mathbb{R}^{n\times n}$ \Comment{pairwise affinity; $\phi_{ii}=0$}
  \Statex \hspace{1.9em} $k \in \mathbb{N}$ \Comment{top-$k$ choice size}
  \Statex \hspace{1.9em} $m,\,\omega$ \Comment{reservoir size, per-round samples}
  \Statex \hspace{1.9em} $\textsc{MinK},\,\textsc{MaxK}$ \Comment{sampled coalition sizes}
  \Statex \hspace{1.9em} $(\varepsilon,\delta)$ \Comment{PAC guidance for $m,\omega$}
\State $R \gets \{1,\dots,n\}$, \quad $\pi \gets \emptyset$
\State $S \gets \textsc{SampleCoalitions}(R, m, \textsc{MinK}, \textsc{MaxK})$ \Comment{reservoir}
\vspace{0.25em}
\Statex \textbf{Definition (top-$k$ utility and choice in a coalition).}
\Statex For $i\in T$, let $P_i(T)=T\setminus\{i\}$. Let $\textsc{TopK}_i(T)$ be the $k$ indices in $P_i(T)$
\Statex with largest $\phi_{ij}$ (ties broken by smaller index); if $|P_i(T)|<k$, take all.
\Statex Define $u_i^k(T) \triangleq \frac{1}{|\textsc{TopK}_i(T)|}\sum_{j\in \textsc{TopK}_i(T)} \phi_{ij}$.
\vspace{0.25em}
\While{$R \neq \emptyset$}
  \State $S_{\text{round}} \gets$ first $\omega$ sets in $S$ that satisfy $T\subseteq R$; remove them from $S$
  \If{$|S_{\text{round}}|<\omega$} \Comment{refresh if reservoir depleted}
    \State $S \gets S \cup \textsc{SampleCoalitions}(R, m, \textsc{MinK}, \textsc{MaxK})$
  \EndIf
  \ForAll{$i \in R$}
     \State $\mathcal{T}_i \gets \{T\in S_{\text{round}}: i\in T\}$
     \If{$\mathcal{T}_i=\emptyset$} \State $B_i \gets \{i\}$ \Comment{degenerate self-loop}
     \Else
       \State $T_i^\star \gets \arg\max_{T\in \mathcal{T}_i} \, u_i^k(T)$ \Comment{deterministic tie-break by $T$'s lexicographic index list}
       \State $B_i \gets \textsc{TopK}_i(T_i^\star)$ \Comment{top-$k$ choice set of $i$ in $T_i^\star$}
     \EndIf
  \EndFor
  \State Build digraph $G=(R,E)$ with edges $(i\to j)$ for all $j\in B_i$ (and optional $(i\to i)$ self-loops)
  \State Let $\mathcal{C} \gets$ the set of sink strongly connected components of $G$
  \State \textbf{(closure check)} Keep only $X\in\mathcal{C}$ such that $\forall i\in X:\; B_i \subseteq X$
  \State Choose $X \in \mathcal{C}$ (e.g., smallest by size then lexicographic) \Comment{any sink closed SCC is valid}
  \State $\pi \gets \pi \cup \{X\}$;\quad $R \gets R \setminus X$
\EndWhile
\State \Return $\pi$
\end{algorithmic}
\end{algorithm}

\newpage

\section{Information Retrieval Preliminaries}
\label{ap:IR1}
Information Retrieval (IR) involves retrieving documents that are likely to be relevant to a user's information need, typically represented as a query. A fundamental IR task is to return a ranked list of documents in descending order of (estimated) relevance. The quality of this ranking directly impacts the user experience in search engines, recommendation systems, and question answering applications.

\subsection{Relevance Model}




\textbf{Dense/Neural Re-ranker} is a language model (like RankLLaMa \citep{ma2024fine}) which takes a query and text as input and produces a relevance score based on the similarity of the query to the provided text. 

\textbf{Relevance Modeling vs Classification}
Classification and relevance modeling are related but distinct approaches in information retrieval (IR).
The term relevance model \citep{10.1145/383952.383972} refers to a mechanism for estimating the likelihood of observing a particular word in documents that are relevant to a given information need or query, whereas classification assigns documents to predefined categories, such as relevant or non-relevant. 

\textbf{Ranking Evaluation with NDCG}

In information retrieval, one commonly used metric to evaluate the effectiveness of ranking models is the Normalized Discounted Cumulative Gain (NDCG). NDCG assesses the quality of a ranked list by measuring the gain (or relevance) of documents based on their position in the list, giving higher weight to relevant documents that appear earlier. Formally, the Discounted Cumulative Gain (DCG) is computed as:

\[
\mathrm{DCG@k} = \sum_{i=1}^{k} \frac{2^{rel_i} - 1}{\log_2(i+1)}
\]

where \( rel_i \) is the graded relevance of the document at position \( i \). The NDCG is then computed by normalizing DCG by the ideal DCG (IDCG), which is the DCG for the optimal ranking:

\[
\mathrm{NDCG@k} = \frac{\mathrm{DCG@k}}{\mathrm{IDCG@k}}
\]

NDCG scores range from 0 to 1, with 1 indicating a perfect ranking. In the DL19 dataset, each query-document pair is labeled with a relevance grade based on human annotations. These annotations are used to compute the NDCG score for a re-ranked list of documents, allowing us to quantify the effectiveness of our rerankers in retrieving the most relevant content at the top of the list.

In our work, we use \textbf{RankLLaMA}, a LLaMA-based reranking model trained to predict the relevance of a document given a query. The model takes as input a formatted string: 

\texttt{"query: \{query\}, passage: \{passage\}"}

and outputs a score between 0 and 1, indicating the estimated relevance. We follow the training procedure described in the RankLLaMA paper \citep{ma2024fine}.

\subsection{Covered Query Term Ratio (CQTR)}

\textbf{Covered Query Term Ratio (CQTR)} is a lexical feature that measures the proportion of unique query terms found in the document\citep{DBLP:journals/corr/QinL13}. Formally:

\[
\text{CQTR} = \frac{|\text{Query Terms} \cap \text{Document Terms}|}{|\text{Query Terms}|}
\]

\subsection{Mean Term Frequency per Document Length (MTF/L)}

\textbf{Mean Term Frequency per Document Length (MTF/L)}  captures the average frequency of query terms normalized by the document length\citep{DBLP:journals/corr/QinL13}. It is computed as:

\[
\text{MTF/DL} = \frac{\sum_{t \in Q} \text{TF}_t(D)}{\text{Length}(D)}
\]

To simplify interpretability tasks (by trying to restrict  polysemanticity\citep{elhage2022toy}), we fine-tuned models on CQTR and MTF/L prediction tasks, with the same input structure as defined above. We do not claim that these two features are the most important for determining relevance; rather, they are easily understood signals that prior work has shown to be implicitly present in neural models \citep{chowdhury2024probing}.

\subsection{Datasets}

\textbf{Datasets}:
\begin{itemize}
    \item \textbf{MS MARCO}: A large-scale dataset consisting of real anonymized web search queries paired with relevant passages. It is a standard benchmark for training and evaluating re-ranking models. In our fine-tuning, we used a modified version of this dataset called MS MARCO Augmented \citep{msmarco_augmented} \citep{ma2025tevatron}, which provides hard negatives from both CoconDenser\citep{gao2021scaling} and BM25. \footnote{More details at \href{https://microsoft.github.io/msmarco/}{https://microsoft.github.io/msmarco/}}
    \item \textbf{DL-19 (TREC Deep Learning Track 2019)}: Contains high-quality relevance annotations for a subset of queries, commonly used for zero-shot and fine-tuned re-ranker evaluation. Craswell et al. provide more information and an overview of this dataset \citep{craswell2020overview}. 
  
\end{itemize}

\newpage

\section{Feature/Layer choice}
\label{ap:IR2}

Previous interpretability studies have been conducted on dense re-rankers, where Chowdhury et al. found that using linear probing, several traditional IR features show a high likelihood of being present in the forward pass activations of a dense re-ranker model\citep{chowdhury2024probing}. Further behavioral analysis by Nijasure et al. observed that large language models (LLMs) tend to learn relevance-related features primarily in MLP layers 5 to 14 of re-ranker architectures\citep{nijasure2025relevance}.


Motivated by these insights, we focused our probing and editing experiments on this layer range (5–14) of Re-Ranker models. Figure~\ref{fig:linear-probing-main} supports this choice: it shows $R^2$ scores for predicting MSLR features across all layers of the RankLLaMA-7b model using linear probing. Features like \textit{covered query term number}, \textit{covered query term ratio}, \textit{mean of stream length normalized term frequency}, and \textit{variance of $tf \cdot idf$} exhibit increasing prominence from the lower to mid layers. This trend might indicate that these layers are key to encoding relevance-related signals.

\begin{figure}[H]
    \centering
    \includegraphics[width=\textwidth]{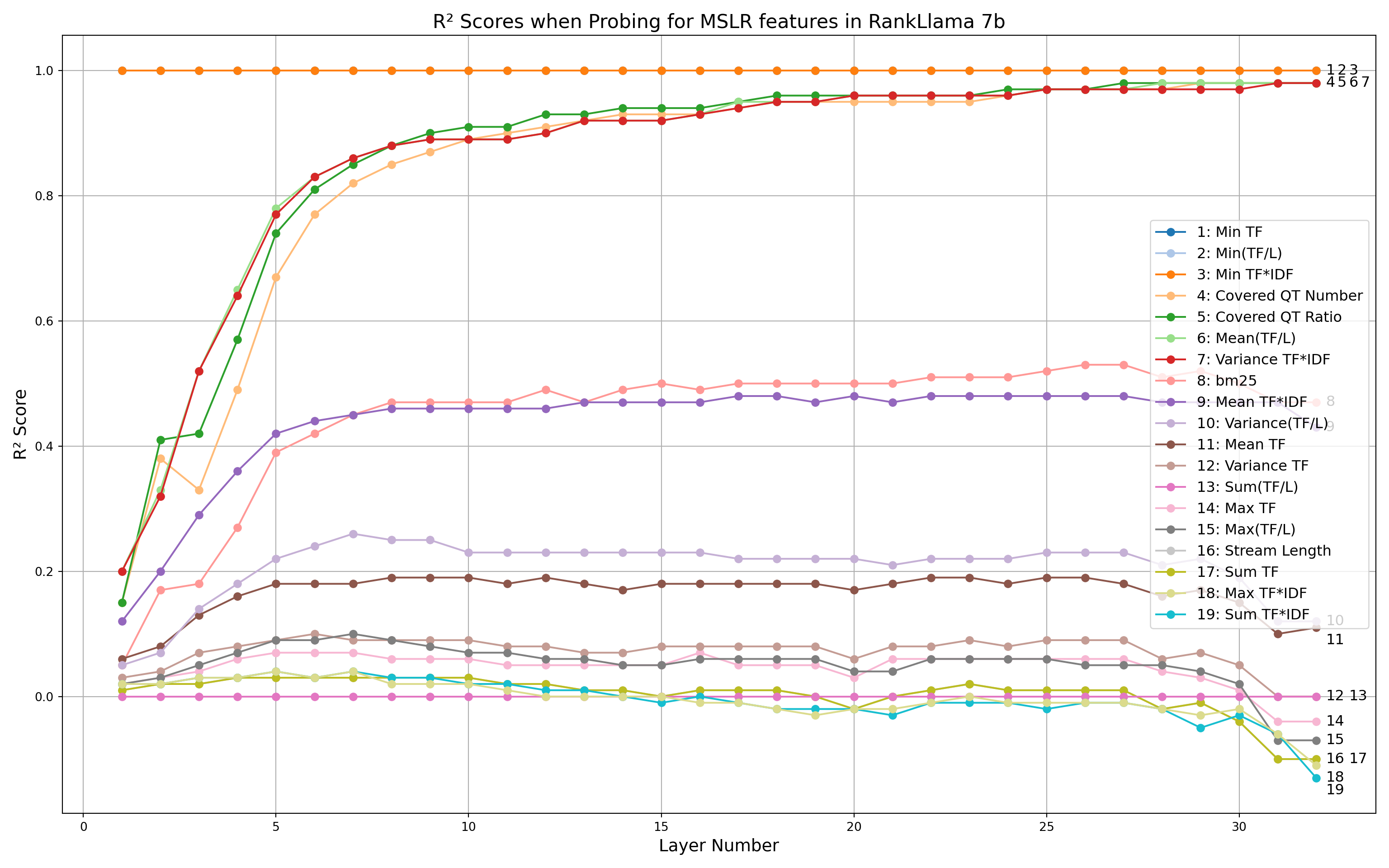}
    \caption{Probing for statistical features from the MSLR dataset in RankLlama2-7b model. Here $QT$ stands for Query Term, $TF$ stands for Term Frequency and $\cdot/L$ stands for length normalized. The graph lines indicate the presence of a particular feature along the layers of the LLM. Certain features like $Min \ TF*IDF$ show consistent presence across the layers. Other features like $Covered \ QT \ Number$, $Covered \ QT \ Ratio$, $Mean(TF/L)$ and $Variance \ TF*IDF$ show increasing prominence from the first layer to the last, ultimately playing an important role in making ranking decisions. Other MSLR features like $Sum(TF/L)$, $Max(TF/L)$, and $Sum \ TF*IDF$ show negative correlation with RankLlama decision making\citep{chowdhury2024probing}.}
    \label{fig:linear-probing-main}
\end{figure}

\newpage
\section{LLM Performance Evaluation}
\label{ap:IR3}
We used LoRA (rank 8) fine-tuning on MLP modules alone for all models described in this section. We had access to four A100 GPUs, depending on availability. We used DeepSpeed's Stage 0 configuration with the AdamW optimizer for fine-tuning all these models. 

Two of the LLMs used in our experiments were fine-tuned on the MS MARCO dataset for 0.3 epochs using Mean Squared Error (MSE) as the loss function. The models were trained to predict statistical IR signals such as the Covered Query Term Ratio (CQTR) and the mean term frequency normalized by passage length (mean(TF/L)). Following this fine-tuning, the models were evaluated on a sampled subset of the DL19 dataset. This evaluation set comprised 43 queries, each associated with 10 documents sampled from a larger candidate pool of 200 documents per query, retrieved using the ReplLLaMA retriever. This setup was designed to assess the models’ ability to learn and generalize statistical IR features relevant to document ranking. Table~\ref{tab:mse-results-watered-down-models} summarizes the finetuning results of the LLMs.

For fine-tuning the re-rankers, we used the code provided in the \href{https://github.com/texttron/tevatron/tree/main/examples/rankllama}{Tevatron} repository \citep{rankllama}. For more details, refer to the paper by \citep{ma2024fine}. Evaluation was conducted on the full DL19 dataset, with document ranking based on the top 200 passages retrieved via the ReplLLaMA retriever. Results for finetuned re-rankers is presented in the table \ref{tab:ndcg-eval}.

\begin{table}[ht]
\centering
\begin{tabular}{lccc}
\toprule
\textbf{Base LLM} & \textbf{Target Feature} & \textbf{Base NDCG@10} & \textbf{NDCG@10 (Finetuned)} \\
\midrule
LLaMA3\citep{dubey2024llama}   & Re-Ranking           & 0.18 & 0.7497 \\
Pythia\citep{biderman2023pythia}   & Re-Ranking            & 0.18 & 0.7521 \\
Mistral\citep{jiang2023mistral7b}  & Re-Ranking            & 0.18 & 0.7570 \\

\bottomrule
\end{tabular}
\caption{NDCG@10 evaluation on DL19 dataset, showing baseline vs post-finetuning performance. All models were fine-tuned on MS MARCO for 1 epoch.}
\label{tab:ndcg-eval}
\end{table}

\begin{table}[ht]
\centering
\begin{tabular}{|l|l|c|c|}
\hline
\textbf{Base LLM} & \textbf{Function} & \textbf{MSE (Start)} & \textbf{MSE (Finetuned, 0.3 epoch)} \\
\hline
LLaMA3\citep{dubey2024llama}   & CQTR         & 3.88  & 0.52  \\
Pythia\citep{biderman2023pythia}   & CQTR         & 1.84  & 0.05  \\
Mistral\citep{jiang2023mistral7b}  & CQTR         & 36.92 & 10.94 \\
LLaMA3\citep{dubey2024llama}   & mean(TF/L)   & 5.06  & 4.49  \\
Pythia\citep{biderman2023pythia}   & mean(TF/L)   & 2.24  & 0.00  \\
Mistral\citep{jiang2023mistral7b}  & mean(TF/L)   & 38.32 & 22.23 \\
\hline
\end{tabular}
\caption{MSE before and after finetuning (0.3 epochs) for CQTR and mean(TF/L) prediction tasks on the sampled DL19 dataset.}
\label{tab:mse-results-watered-down-models}
\end{table}

\newpage
\section{Intrinsic Coalition Evaluation} \label{app:intrinsic}
\paragraph{Synergy Metrics.}
Let $x$ be an input sampled from the task distribution $\mathcal{D}$ and $\ell(x) \in \mathbb{R}$ the \emph{layer-local logit} (defined in \S\ref{sec:valuation}) with all neurons active. For any neuron set $S$ we denote by $\ell_{-S}(x)$ the same forward pass after zeroing the activations of every $k \in S$ \emph{only} inside the LoRA-adapted MLPs. We define the \emph{marginal contribution} of a single neuron as $\psi(i) = \mathbb{E}_{x \sim \mathcal{D}}[\ell(x) - \ell_{-\{i\}}(x)]$, and the \emph{pairwise interaction (synergy)} of two neurons as $\psi(i,j) = \mathbb{E}_{x \sim \mathcal{D}}[\ell(x) - \ell_{-\{i\}}(x) - \ell_{-\{j\}}(x) + \ell_{-\{i,j\}}(x)]$. A positive $\psi(i,j)$ means that removing \emph{both} neurons harms the logit more than the sum of their individual removals (synergy), while a negative value indicates redundancy. For a coalition $C \subseteq \{1, \ldots, n\}$ we report two size-agnostic aggregates: $\text{Pair}(C) = \frac{1}{|C|(|C|-1)} \sum_{\substack{i,j \in C \\ i \neq j}} \psi(i,j)$ and $\text{Ratio}(C) = \frac{\sum_{i \neq j \in C} \psi(i,j)}{\sum_{i \in C} \psi(i)}$. \emph{Pairwise Synergy} is the mean interaction strength across all ordered neuron pairs, fully normalized for coalition size, while \emph{Ratio Synergy} compares the \emph{extra} value created by pairwise cooperation (numerator) to the value explained by separate single-neuron effects (denominator). A ratio near $1$ or greater (\emph{super-additivity}) indicates that the coalition's joint influence exceeds the sum of its parts, whereas a ratio near $0$ (or negative) signals antagonistic or redundant behavior.

\paragraph{Intrinsic Evaluation Results.}
Regarding synergy quality (Table~\ref{tab:coalition_synergy}), the two hedonic variants strictly dominate all baselines across all three backbones and all three MS-MARCO objectives: \emph{Hedonic-PAS} attains the best Pairwise \emph{and} Ratio score in 26 out of 27 model-metric cells, while \emph{Hedonic-OCA} follows as a close second. Relative to spherical $k$-means, the average margin is +0.29 Pairwise and +0.49 Ratio, indicating that activation similarity alone is a poor proxy for \emph{functional} cooperation. Random and hierarchical clusterings even dip into negative Pairwise values (sub-additivity) and hover near the additive boundary on the Ratio metric, underscoring the value of an explicit game-theoretic objective. Confidence intervals (95\%, $df=2$) never overlap between Hedonic-PAS and the best baseline, with paired $t$-tests yielding $p<0.01$ for every layer. K-means/HAC produce fairly uniform sizes (20-45 neurons per cluster), whereas hedonic output follows a heavy-tailed Zipf-like law: each layer contains a single "macro" coalition ($>150$ neurons), $\sim 100$ coalitions of size 2, and approximately 500 clusters with $|C|>1$ covering $\sim 14,000$ neurons. In most settings, the top cover algorithm converges with reservoir size $m \leq 120,000$ and number of samples per iteration $\omega \leq 32,000$. 

\begin{table}[t]
\centering
\caption{Coalition Synergy ($\uparrow$) measured via Pairwise and ratio: mean $\pm95\%$ CI across three seeds. }
\label{tab:coalition_synergy}
\small
\begin{tabular}{lcc|cc|cc}
\toprule
\multirow{2}{*}{\textbf{Task / Algorithm}}
 & \multicolumn{2}{c|}{\textbf{LLaMA‑3.1}}
 & \multicolumn{2}{c|}{\textbf{Mistral}}
 & \multicolumn{2}{c}{\textbf{Pythia}} \\
\cmidrule(r){2-3}\cmidrule(lr){4-5}\cmidrule(l){6-7}
& Pairwise & Ratio & Pairwise & Ratio & Pairwise & Ratio \\
\midrule
\multicolumn{7}{l}{\textbf{Covered Query Term Ratio}}\\
Random                    & $0.01\!\pm\!0.05$ & $0.49\!\pm\!0.04$ & $-0.02\!\pm\!0.06$ & $0.53\!\pm\!0.05$ & $0.00\!\pm\!0.05$ & $0.50\!\pm\!0.04$\\
K‑means                   & $-0.23\!\pm\!0.03$ & $0.32\!\pm\!0.03$ & $-0.20\!\pm\!0.04$ & $0.36\!\pm\!0.03$ & $-0.18\!\pm\!0.03$ & $0.37\!\pm\!0.03$\\
Hier.\ clustering         & $-0.11\!\pm\!0.04$ & $0.41\!\pm\!0.03$ & $-0.13\!\pm\!0.04$ & $0.43\!\pm\!0.03$ & $-0.17\!\pm\!0.04$ & $0.40\!\pm\!0.03$\\
Hedonic (OCA)             & $0.08\!\pm\!0.01$ & $0.74\!\pm\!0.02$ & $0.10\!\pm\!0.01$ & $0.71\!\pm\!0.02$ & $0.06\!\pm\!0.01$ & $0.78\!\pm\!0.02$\\
Hedonic (PAS)             & $\mathbf{0.12}\!\pm\!0.005$ & $\mathbf{0.86}\!\pm\!0.01$ & $\mathbf{0.13}\!\pm\!0.005$ & $\mathbf{0.84}\!\pm\!0.01$ & $\mathbf{0.15}\!\pm\!0.006$ & $\mathbf{0.89}\!\pm\!0.01$\\
\addlinespace
\multicolumn{7}{l}{\textbf{Mean of Normalized Term Frequency}}\\
Random                    & $0.02\!\pm\!0.05$ & $0.41\!\pm\!0.04$ & $-0.01\!\pm\!0.05$ & $0.53\!\pm\!0.05$ & $0.00\!\pm\!0.05$ & $0.50\!\pm\!0.04$\\
K‑means                   & $-0.22\!\pm\!0.03$ & $0.34\!\pm\!0.03$ & $-0.21\!\pm\!0.03$ & $0.35\!\pm\!0.03$ & $-0.16\!\pm\!0.03$ & $0.31\!\pm\!0.03$\\
Hier.\ clustering         & $-0.08\!\pm\!0.04$ & $0.43\!\pm\!0.03$ & $-0.08\!\pm\!0.04$ & $0.43\!\pm\!0.03$ & $-0.15\!\pm\!0.04$ & $0.39\!\pm\!0.03$\\
Hedonic (OCA)             & $0.01\!\pm\!0.01$ & $0.72\!\pm\!0.02$ & $0.04\!\pm\!0.01$ & $0.77\!\pm\!0.02$ & $0.03\!\pm\!0.01$ & $0.74\!\pm\!0.02$\\
Hedonic (PAS)             & $\mathbf{0.09}\!\pm\!0.006$ & $\mathbf{0.85}\!\pm\!0.01$ & $\mathbf{0.14}\!\pm\!0.006$ & $\mathbf{0.82}\!\pm\!0.01$ & $\mathbf{0.16}\!\pm\!0.007$ & $\mathbf{0.89}\!\pm\!0.01$\\
\addlinespace
\multicolumn{7}{l}{\textbf{Relevance}}\\
Random                    & $0.01\!\pm\!0.05$ & $0.42\!\pm\!0.04$ & $-0.02\!\pm\!0.05$ & $0.44\!\pm\!0.04$ & $0.03\!\pm\!0.05$ & $0.49\!\pm\!0.04$\\
K‑means                   & $-0.13\!\pm\!0.03$ & $0.33\!\pm\!0.03$ & $-0.14\!\pm\!0.03$ & $0.36\!\pm\!0.03$ & $-0.19\!\pm\!0.03$ & $0.38\!\pm\!0.03$\\
Hier.\ clustering         & $-0.09\!\pm\!0.04$ & $0.42\!\pm\!0.03$ & $-0.12\!\pm\!0.04$ & $0.44\!\pm\!0.03$ & $-0.08\!\pm\!0.04$ & $0.44\!\pm\!0.03$\\
Hedonic (OCA)             & $0.05\!\pm\!0.01$ & $0.77\!\pm\!0.02$ & $0.05\!\pm\!0.01$ & $0.75\!\pm\!0.02$ & $0.04\!\pm\!0.01$ & $0.73\!\pm\!0.02$\\
Hedonic (PAS)             & $\mathbf{0.11}\!\pm\!0.005$ & $\mathbf{0.81}\!\pm\!0.01$ & $\mathbf{0.13}\!\pm\!0.005$ & $\mathbf{0.87}\!\pm\!0.01$ & $\mathbf{0.14}\!\pm\!0.006$ & $\mathbf{0.86}\!\pm\!0.01$\\
\bottomrule
\end{tabular}
\end{table}

\newpage
\section{Coalition Flow Example}
\begin{figure}
    \centering
    \includegraphics[width=0.9\linewidth]{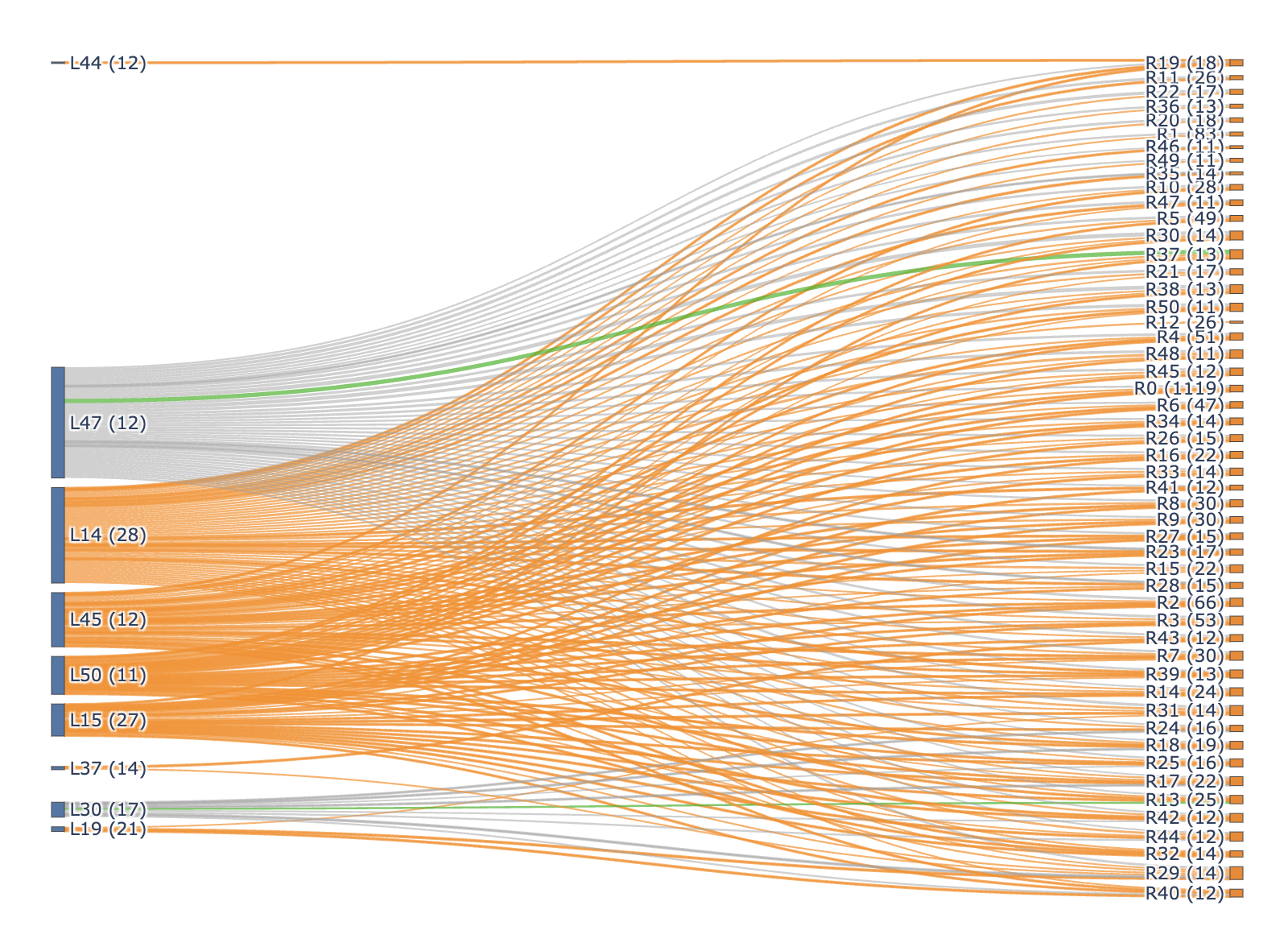}
    \caption{Coalition flow across depth for \textsc{Mistral‑7B} fine‑tuned on the \emph{relevance modelling} task. The figure depicts flow from layers $7 \rightarrow 8$, with orange depicting \emph{split}, green depicting \emph{persist} and grey depicting \emph{vanish} of coalitions. }
    \label{fig:tracking}
\end{figure}

\end{document}